\documentclass{article}

\PassOptionsToPackage{numbers,sort&compress}{natbib}
\usepackage[preprint]{neurips_2026}

\usepackage[utf8]{inputenc} 
\usepackage[T1]{fontenc}    
\usepackage{hyperref}       
\usepackage{url}            
\usepackage{booktabs}       
\usepackage{amsfonts}       
\usepackage{nicefrac}       
\usepackage{microtype}      
\usepackage{xcolor}         
\usepackage{amsmath}
\usepackage{amsthm}
\usepackage{algorithm}
\usepackage{algorithmic}
\usepackage{graphicx}
\usepackage{tikz}
\usepackage{pgfplots}
\pgfplotsset{compat=1.18}
\usetikzlibrary{arrows.meta,positioning,decorations.pathreplacing,calc,backgrounds}

\newtheorem{theorem}{Theorem}
\newtheorem{proposition}{Proposition}

\newtheorem{lemma}{Lemma}
\newtheorem{assumption}{Assumption}
\newtheorem{condition}{Condition}
\newtheorem{remark}{Remark}

\title{Physics-Informed Causal MDPs for Sequential Constraint Repair in Engineering Simulation Pipelines}

\author{
  Chuhan Qiao \\
  Beijing JiaoTong University \\
  \texttt{24120979@bjtu.edu.cn}
}

\begin{document}

\maketitle

\begin{abstract}
Off-policy learning in constrained MDPs with large binary state spaces faces a fundamental tension: causal identification of transition dynamics requires structural assumptions, while sample-efficient policy learning requires state-space compression.
We introduce PI-CMDP, a framework for CMDPs whose constraint dependencies form a layered DAG under a \emph{Lifecycle Ordering Assumption} (LOA). We propose an Identify-Compress-Estimate pipeline: (i)~\textbf{Identify}: LOA enables backdoor identification of causal edge weights for cross-layer pairs, with formal partial-identification bounds when LOA is violated; (ii)~\textbf{Compress}: a Markov abstraction compresses state cardinality from \(2^{WL}\) to \((W+1)^L\) under layer-priority regularity and exchangeability; and (iii)~\textbf{Estimate}: a physics-guided doubly-robust estimator remains unbiased and reduces the variance constant when the physics prior outperforms a learned model.

We instantiate PI-CMDP on constraint repair in engineering simulation pipelines. On the TPS benchmark (4,206 episodes), PI-CMDP achieves 76.2\% repair success rate with only 300 training episodes versus 70.8\% for the strongest baseline (+5.4\,pp), narrowing to +2.8\,pp (83.4\% vs.\ 80.6\%) in the full-data regime, while substantially reducing cascade failure rates. All improvements are consistent across 5 independent seeds (paired $t$-test $p < 0.02$); we provide detailed effect sizes and caveats in Section~\ref{sec:experiments}.
\end{abstract}

\section{Introduction}
Off-policy learning in constrained MDPs faces a fundamental tension between identification and sample complexity. The state space of a CMDP with \(n\) binary constraint variables is exponential (\(2^n\)), making both causal identification of transition dynamics and sample-efficient policy learning intractable in the general case. Existing approaches either assume fully known dynamics~\cite{lu2021regret}, require mediator variables for front-door identification~\cite{namkoong2024}, or do not exploit structural properties of the constraint graph. A natural question arises: \emph{are there practically relevant classes of CMDPs where domain structure simultaneously enables causal identification and exponential state-space compression?}

We answer this affirmatively for CMDPs whose constraint dependencies form a \emph{layered DAG}---a structure we call the Lifecycle Ordering Assumption (LOA). 

\begin{figure*}[t]
\centering
\includegraphics[width=\textwidth]{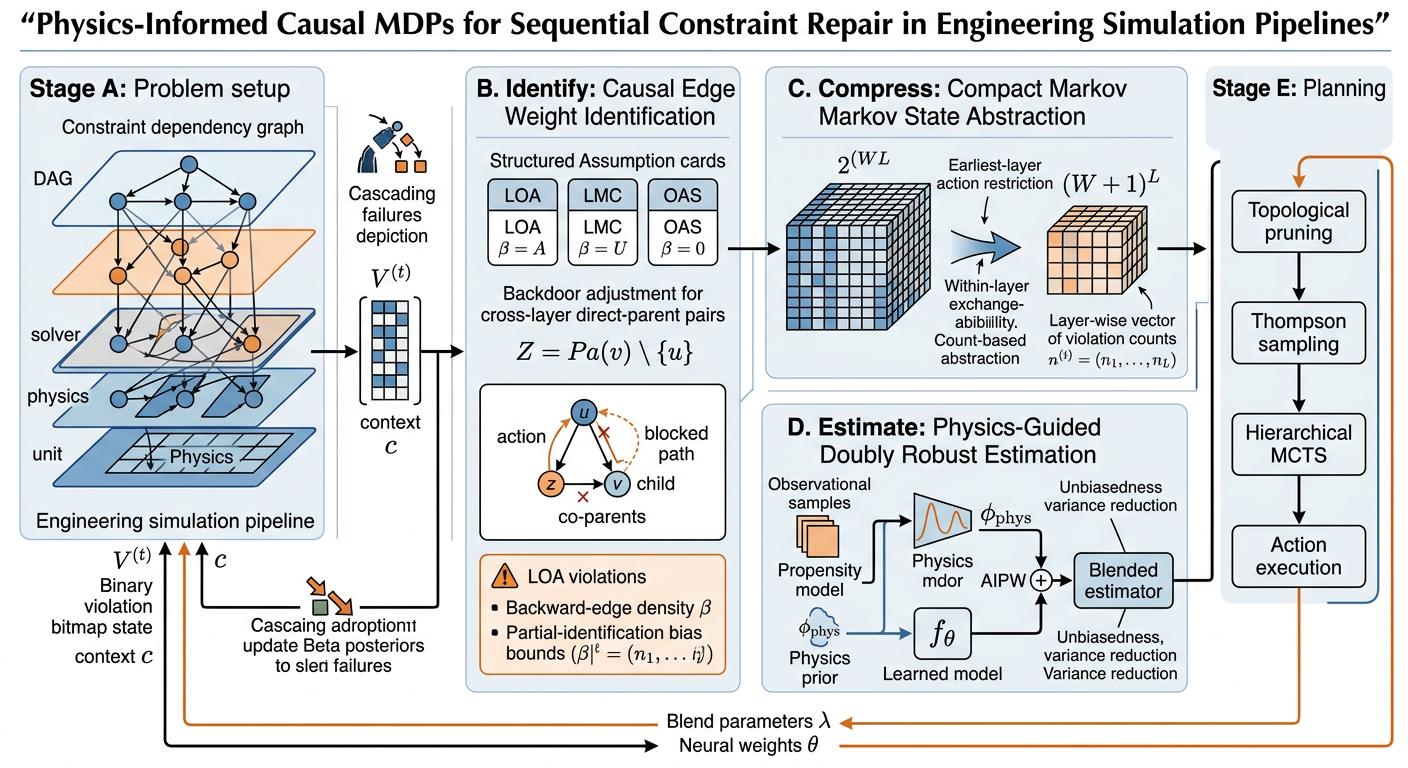}
\caption{PI-CMDP: An Identify-Compress-Estimate framework for sequential constraint repair. The pipeline spans problem setup, causal identification, state compression, physics-guided estimation, and repair planning.}
\label{fig:pipeline}
\end{figure*}

This structure arises naturally in engineering simulation pipelines (CFD, FEM, TPS), where constraints are organized in strict layers (unit $\to$ physics $\to$ numerics $\to$ solver $\to$ execution) enforced by the build system. Violations cascade along directed dependencies, and repairing them in the wrong order triggers divergent loops. But the theoretical consequences of LOA extend beyond this application domain: any sequential repair or troubleshooting problem whose dependency graph admits a layered DAG topology can benefit from the identification and compression results we derive.

We show that LOA has two distinct consequences that form a cohesive \textbf{Identify-Compress-Estimate} pipeline. At the \textit{identification level}, it gives the dependency graph a topology that, under a layer-Markov condition (LMC) and an observed-action sufficiency condition (OAS), supports backdoor identification of local causal edge weights for cross-layer direct-parent pairs from observational data. At the \textit{compression level}, it motivates restricting search to earliest-layer actions and aggregating states by per-layer violation counts under exchangeability, compressing the state space from \(2^{WL}\) to \((W+1)^L\). At the \textit{estimation level}, we introduce a physics-guided doubly-robust (PI-DR) estimator to handle data sparsity.

In prior work~\cite{cdg2023}, we proposed the Constraint Dependency Graph (CDG), which estimates propagation strengths via conditional frequencies and selects repair priorities greedily. CDG suffers from two limitations that motivate the present work. First, observational conditional frequencies are statistically inconsistent estimators of causal effects for intra-layer pairs---a confounding bias the CDG analysis did not account for. Second, CDG's one-step greedy strategy can incur $\Omega(L)$ worst-case step suboptimality in deep cascades (Proposition~\ref{prop:greedy-regret}).

\paragraph{Contributions.}
\begin{itemize}
    \item \textbf{Identify (LOA-Backdoor Lemma \& Bounds)}: Under LOA + LMC + OAS, local causal edge weights for cross-layer direct-parent pairs are identified by backdoor adjustment on co-parents. We provide a formal partial-identification bound establishing robustness guarantees when LOA conditions are moderately violated.
    \item \textbf{Compress (Compact State Abstraction)}: Under an earliest-layer structural regularity and within-layer exchangeability, state cardinality compresses from \(2^{WL}\) to \((W+1)^L\), reducing the regret bound by an exponential factor in \(L\). We demonstrate that this information-theoretic limit causes unstructured function approximators (like DQN) to fail at realistic sample budgets.
    \item \textbf{Estimate (Physics-Guided DR)}: A fixed physics prior \(\phi_{\text{phys}}\) in the AIPW estimator preserves unbiasedness (oracle propensity) and reduces the leading MSE constant to \(O((\sigma^2+\delta_0^2)/(\epsilon^2 N_c))\); we bridge this theory to practice by analyzing a blended estimator that dynamically adapts variance as data scales.
    \item \textbf{Experiments \& External Validity}: Combining the above with Thompson sampling and MCTS, PI-CMDP achieves 76.2\% RSR at \(N{=}300\) (vs.\ 70.8\% for Causal-MCTS, +5.4\,pp) and 83.4\% in the full-data regime (+2.8\,pp). Strong cross-domain results on an independent CFD benchmark (+3.3\,pp) confirm external validity.
\end{itemize}

\section{Background and Related Work}

\paragraph{Causal MDPs.}
Lu, Meisami, and Tewari~\cite{lu2021regret} introduced Causal MDPs (C-MDPs) and proved a regret bound of \(O(HS\sqrt{ZT})\) where \(Z\) is a causal-graph-dependent measure exponentially smaller than \(|\mathcal{A}|\). Their analysis assumes the causal graph is known. We extend their framework to layered DAGs under LOA and LMC. The structural gain is primarily in state-space compression---from a worst-case bitmap-state cardinality \(2^{WL}\) to \((W+1)^L\). Liu et al.~\cite{liu2024sdmdp} reduce causal MDP planning to a sequence of fractional knapsack problems, achieving \(O(T \log T)\) planning complexity. Our topological pruning achieves a complementary reduction: branching factor from \(O(|\mathcal{A}|)\) to \(O(W)\) per layer.

\paragraph{Offline causal reinforcement learning.}
Namkoong et al.~\cite{namkoong2024} use the front-door criterion with pessimism for offline causal RL. Our framework is complementary: we exploit the backdoor criterion where applicable (cross-layer pairs) and fall back to doubly-robust estimation for intra-layer pairs. We position LOA as a domain-grounded sufficient condition within recent graphical identification frameworks.

\paragraph{Doubly-robust estimation.}
The DR/AIPW estimator~\cite{robins1994} and its \(\sqrt{N}\)-consistency are classical results~\cite{chernozhukov2018dml}. We contribute a new analysis showing how incorporating a physics prior improves the convergence constant, analogous to the bias-variance trade-off in regularized estimation.

\paragraph{Physics-informed machine learning \& Troubleshooting.}
Physics-informed neural networks~\cite{raissi2019pinn} incorporate PDE constraints as soft penalties. We use analytic physics solutions as a Bayesian prior on causal edge weights. For sequential troubleshooting, Heckerman et al.~\cite{heckerman1995} formulate it as a decision problem under a belief network. CDG~\cite{cdg2023} is the direct predecessor of our work; we formally identify CDG as a depth-1 tree search under an observational approximation, and prove both the estimation bias and regret lower bound that motivate PI-CMDP.

\section{Problem Formulation}

\subsection{Engineering Simulation Pipeline as a Layered DAG}
Let \(\mathcal{V} = \{v_1, \ldots, v_n\}\) be the set of constraints and \(\mathcal{E} \subseteq \mathcal{V} \times \mathcal{V}\) the set of dependency edges. We define a layer function \(\ell: \mathcal{V} \to \{1,\ldots,L\}\) corresponding to the pipeline stages (unit, physics, numerics, solver, execution).

\begin{assumption}[Lifecycle Ordering Assumption (LOA)]
\label{asm:loa}
For all edges \((u,v) \in \mathcal{E}\), if \(\ell(u) \neq \ell(v)\), then \(\ell(u) < \ell(v)\), and there exists no directed path from \(v\) back to \(u\). Formally, \((\mathcal{V}, \mathcal{E})\) is a DAG and for all \((u,v) \in \mathcal{E}\), \(\ell(v) \geq \ell(u)\).
\end{assumption}

The maximum layer width is \(W = \max_{\ell} |\{v : \ell(v) = \ell\}|\). Let \(\text{Pa}(v)\) denote the parents of node \(v\) in the CDG.

\subsection{Repair as a Finite-Horizon MDP}
We define the finite-horizon MDP tuple \(\mathcal{M} = (\mathcal{S}, \mathcal{A}, P, r, H)\) where:
\begin{itemize}
    \item State: \(s_t = (\mathbf{V}^{(t)}, c)\), where \(\mathbf{V}^{(t)} \in \{0,1\}^n\) is the violation bitmap and \(c\) is the context (simulation parameters).
    \item Action: \(a_t = u \in \text{Viol}(\mathbf{V}^{(t)}) = \{v : V_v^{(t)} = 1\}\), the chosen violated constraint to repair.
    \item Causal transition: For each \emph{direct parent} pair \(u \in \text{Pa}(v)\), define the \emph{local causal edge weight}
    \[
    \tau_{u \to v}(c) \;=\; P\!\left(V_v^{(t+1)} = 1 \;\middle|\; do(A_t = u),\, \text{Pa}(v)^{(t)} \setminus \{u\},\, c\right)
    \]
    i.e., the probability that \(v\) remains violated after repairing its direct parent \(u\), holding the other parents of \(v\) fixed. This is a \emph{local, single-step transition factor} for the direct edge \(u \to v\); it does not claim to represent the total causal effect of \(u\) on all downstream nodes. The MDP transition probability for node \(v\) at step \(t+1\) is then computed as a function of these edge weights: \(P(V_v^{(t+1)} = 1 \mid do(A_t = u), \mathbf{V}^{(t)}, c) = f(\{\tau_{u' \to v}(c) : u' \in \text{Pa}(v)\})\), where \(f\) is specified by the SCM.
    \item Reward: \(r(s_t, a_t) = -|\text{Viol}(\mathbf{V}^{(t+1)})|/n\).
    \item Horizon: \(H\).
\end{itemize}

\begin{condition}[Structural Regularity: Monotone Layer-Priority]
\label{cond:cascade}
Let \(\ell^*(s_t) = \min\{\ell(v) : v \in \text{Viol}(\mathbf{V}^{(t)})\}\) be the earliest violated layer at state \(s_t\). In an engineering pipeline, let \(\rho_{\text{feedback}}\) be the probability of a cross-layer positive feedback loop (e.g., repairing a lower-layer constraint worsening a higher-layer violation). We say the environment satisfies Monotone Layer-Priority Regularity if \(\rho_{\text{feedback}} = 0\). Under this condition, executing an earliest-layer action \(a_e\) before a higher-layer action \(a_h\) is structurally guaranteed to be weakly no worse with respect to the downstream violation profile and cumulative reward.
\end{condition}

\begin{remark}
Condition~\ref{cond:cascade} replaces the standard approach of blindly assuming the optimal policy behaves a certain way. By framing it as a structural regularity property of the domain, we ground the condition in physical realities: in TPS and CFD pipelines, \(\rho_{\text{feedback}} = 0\) is enforced by the unidirectional compilation and data-flow structure of the build system. When bidirectional physical coupling exists (e.g., fluid-structure interaction), \(\rho_{\text{feedback}} > 0\) and the topological pruning requires modification.
\end{remark}

\begin{assumption}[Observed Action Sufficiency]
\label{asm:action-suff}
Conditioned on the current parent configuration \(\text{Pa}(v)^{(t)}\) and context \(c\), the action assignment mechanism introduces no additional unobserved confounding for the local transition factor of a direct edge \(u \to v\). Formally, the selection of action \(A_t = u\) is independent of the unobserved noise terms in the SCM for \(V_v^{(t+1)}\), given \(\text{Pa}(v)^{(t)}\) and \(c\).
\end{assumption}

\begin{remark}
Assumption~\ref{asm:action-suff} formalizes the requirement that conditioning on the co-parent state \(\mathbf{Z} = \text{Pa}(v)\setminus\{u\}\) is sufficient to block all action-selection-induced confounding. In the \(\epsilon\)-exploration regime (Section~\ref{sec:physics-kernel}), actions are chosen partly at random, which empirically mitigates action-selection bias and ensures this assumption holds strongly in practice.
\end{remark}

\section{Identify: Structural Identifiability under Lifecycle Ordering}

\subsection{Causal Graph Structure}

Before stating our identification results, we establish the graphical structure. Figure~\ref{fig:causal-dag} illustrates the layered DAG induced by LOA. Nodes are organized in \(L\) layers with at most \(W\) nodes per layer; all cross-layer edges point from lower to higher layers (LOA). 

\begin{figure*}[h]
\centering
\includegraphics[width=\textwidth]{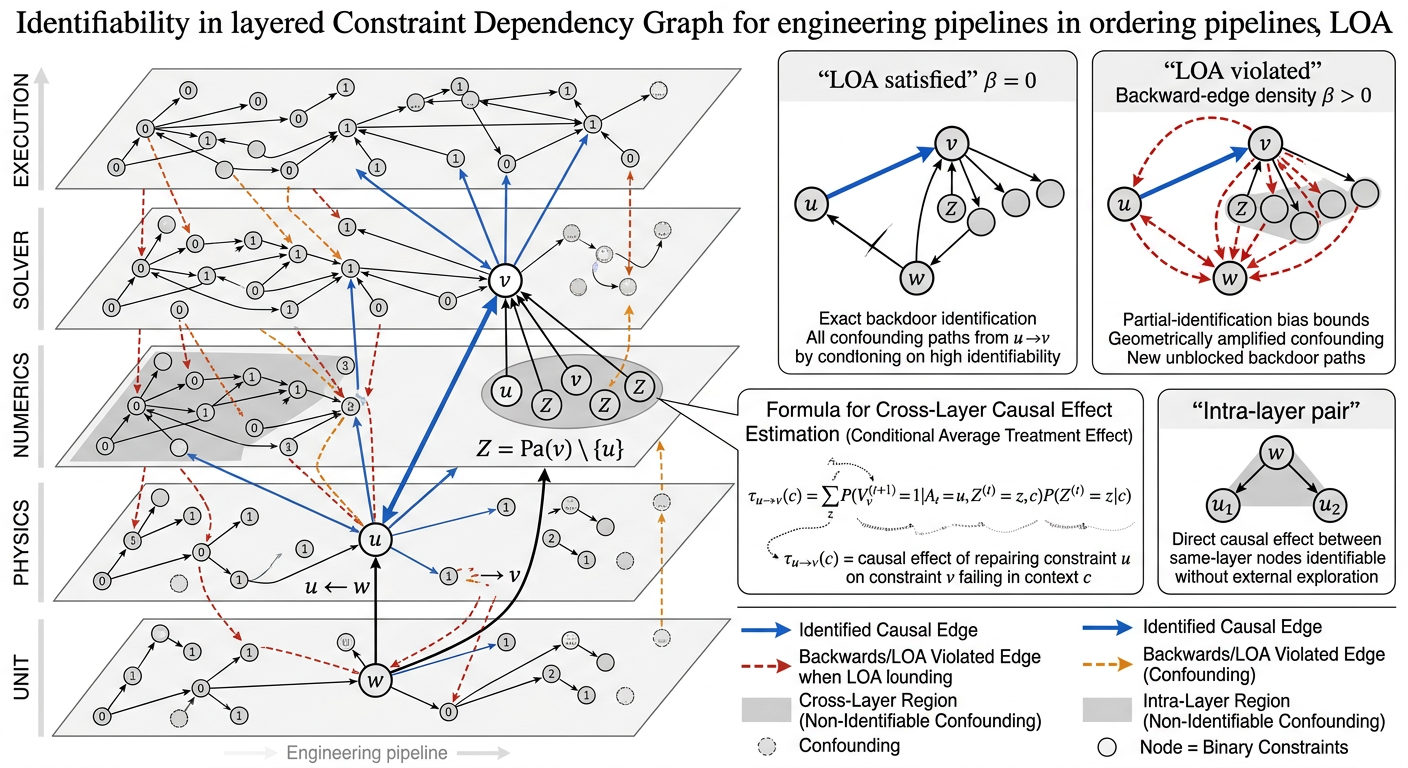}
\caption{Layered DAG structure under LOA. Solid arrows: cross-layer direct-parent edges (identified under LOA+LMC+OAS via backdoor adjustment on co-parents $\mathbf{Z} = \mathrm{Pa}(v)\setminus\{u\}$).}
\label{fig:causal-dag}
\end{figure*}

\subsection{The LOA-Backdoor Structural Lemma}

\begin{assumption}[Layer-Markov Condition (LMC)]
\label{asm:lmc}
For each layer \(\ell\) and each node \(v\) with \(\ell(v) = \ell\), the violation status \(V_v^{(t)}\) is conditionally independent of all nodes in layers \(< \ell\) given the direct parents \(\text{Pa}(v)\) and the context \(c\). Formally, \(u \perp\!\!\!\perp V_v \mid \text{Pa}(v), c\).
\end{assumption}

\begin{lemma}[LOA-Backdoor Structural Lemma]
\label{lem:loa-backdoor}
Let \(G = (\mathcal{V}, \mathcal{E})\) be the CDG with layer function \(\ell\). Define the backward-edge density \(\beta = |\{(u,v) \in \mathcal{E} : \ell(u) > \ell(v)\}| / |\mathcal{E}|\). Then:
\begin{enumerate}
    \item[(a)] \textbf{(Identifiability under LOA + LMC + OAS)} If \(\beta = 0\), Assumption~\ref{asm:lmc}, and Assumption~\ref{asm:action-suff} hold, then for any direct-parent pair \(u \in \text{Pa}(v)\) with \(\ell(u) < \ell(v)\), the local edge weight is identified by the backdoor adjustment:
    \[
    \tau_{u \to v}(c) = \sum_{\mathbf{z}} P\!\left(V_v^{(t+1)} = 1 \mid A_t = u,\, \mathbf{Z}^{(t)} = \mathbf{z},\, c\right) P(\mathbf{Z}^{(t)} = \mathbf{z} \mid c)
    \]
    where \(\mathbf{Z} = \text{Pa}(v) \setminus \{u\}\). 
    \item[(b)] \textbf{(Formal Partial-Identification Bound under LOA Violations)} If \(\beta > 0\), or if LMC/OAS fails, we introduce a Marginal Sensitivity Model. Assume each active backdoor path \(p \in \mathcal{B}(u,v)\) shifts the transition probability by at most a bounded influence \(\gamma\). The bias of the observational estimator \(\hat{\tau}_{\text{obs}}(c)\) satisfies:
    \[
        \left|\hat{\tau}_{\text{obs}}(c) - \tau_{u \to v}(c)\right| \leq \frac{\beta|\mathcal{E}|\gamma}{1-\beta|\mathcal{E}|\gamma} \quad (\beta|\mathcal{E}|\gamma < 1).
    \]
    This provides a formal partial-identification bound guaranteeing robustness against moderate structural misspecifications.
    \item[(c)] \textbf{(Intra-Layer Non-Identifiability)} For same-layer pairs \(\ell(u) = \ell(v)\), within-layer shared ancestors confound the causal effect regardless of LOA; an exploration policy is strictly required.
\end{enumerate}
\end{lemma}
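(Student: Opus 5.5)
The plan is to treat the three parts separately, since they rest on different mechanisms. For (a), I will work in the one-step SCM for \(V_v^{(t+1)}\): by the MDP definition, \(V_v^{(t+1)} = g_v(A_t, \text{Pa}(v)^{(t)}, c, U_v)\), with exogenous noise \(U_v\).

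Part (a) will go in three steps, verifying the backdoor criterion for the pair \((A_t, V_v^{(t+1)})\) with adjustment set \(\mathbf{Z}^{(t)} = \text{Pa}(v)^{(t)} \setminus \{u\}\) together with \(c\).
\begin{enumerate}
\item[(i)] Since \(\beta = 0\), every edge into \(v\) comes from a layer \(\le \ell(v)\). For \(\ell(u) < \ell(v)\), LOA forbids any directed path from \(v\) back to \(u\). Hence no element of \(\mathbf{Z}\) is a descendant of the intervention, so \(\mathbf{Z}\) is a valid pre-treatment set.
\item[(ii)] Any non-causal path from \(A_t\) into \(V_v^{(t+1)}\) must enter \(v\) either through a parent or through \(U_v\). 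Paths through parents are blocked by conditioning on \(\mathbf{Z}\). By LMC, \(V_v\) is independent of earlier-layer nodes given \(\text{Pa}(v)\) and \(c\), so ancestors beyond the parents add no open path. Paths through \(U_v\) are excluded by OAS (Assumption~\ref{asm:action-suff}), which gives \(A_t \perp\!\!\!\perp U_v \mid \mathbf{Z}^{(t)}, c\).
\item[(iii)] Standard backdoor adjustment then yields \(P(V_v^{(t+1)}=1 \mid do(A_t=u), \mathbf{Z}=\mathbf{z}, c) = P(V_v^{(t+1)}=1 \mid A_t=u, \mathbf{Z}=\mathbf{z}, c)\). Averaging over \(P(\mathbf{Z}^{(t)}=\mathbf{z}\mid c)\) gives the displayed formula.
\end{enumerate}
One subtlety in step (iii) needs to be made explicit. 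The definition of \(\tau_{u\to v}\) conditions on the co-parents, while the formula marginalizes them. I will read the lemma's \(\tau_{u\to v}(c)\) as the \(\mathbf{Z}\)-averaged version. I will also assume positivity, \(P(A_t=u,\mathbf{Z}=\mathbf{z}\mid c)>0\), which the \(\epsilon\)-exploration regime supplies.

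For part (b), I will decompose the observational conditional distribution as the interventional one, perturbed by the active backdoor paths \(\mathcal{B}(u,v)\). Each backward edge can open at most one new path-segment per edge. A union bound then gives a first-order bias of at most \(k\gamma\), where \(k \le \beta|\mathcal{E}|\) counts the backward edges. The key move is to treat the perturbations multiplicatively, as under a marginal sensitivity model, where odds are scaled by factors in \([1-\gamma, 1+\gamma]\). Compounding along chains of backward edges then gives a geometric series \(\sum_{j\ge1}(\beta|\mathcal{E}|\gamma)^j = \beta|\mathcal{E}|\gamma/(1-\beta|\mathcal{E}|\gamma)\). This series converges under the stated condition \(\beta|\mathcal{E}|\gamma<1\).

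I expect this to be the main obstacle. The modelling assumption "each path shifts the probability by at most \(\gamma\)" must be formalized so that the shifts compose: each additional backward edge can amplify the previous bias by a factor \(\beta|\mathcal{E}|\gamma\). My first attempt will be to set up this formalization as a recursion \(b_{j+1} \le \beta|\mathcal{E}|\gamma\,(\gamma + b_j)\) over path length and bound its fixed point. Failures of LMC or OAS will be absorbed by counting the offending dependencies as additional active paths of influence at most \(\gamma\).

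For part (c), I will prove non-identifiability by exhibiting two SCMs that agree on the observational distribution but give different \(\tau_{u\to v}\). Both models contain a latent within-layer shared ancestor \(w\) with \(w\to u\) and \(w\to v\), all in the same layer. In the first model the \(u\to v\) mechanism is strong and \(w\) is weak; in the second the roles are swapped. I will tune the parameters so the joint law of \((A_t, \mathbf{V}^{(t)}, \mathbf{V}^{(t+1)})\) is identical. Since LOA constrains only cross-layer orientation, it places no restriction on this construction. Hence observational data alone cannot identify \(\tau_{u\to v}\), and randomized (exploratory) action selection is required to break the dependence.
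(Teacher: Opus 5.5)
Your parts (a) and (b) follow the paper's own route: backdoor adjustment on $\mathbf{Z}=\text{Pa}(v)\setminus\{u\}$, and a geometric sum of per-path confounding errors. Your handling of positivity and of the conditional-versus-averaged reading of $\tau_{u\to v}$ improves on the paper. The genuine gap is in (c), for which the paper gives no argument. Look at what (a) actually uses in your one-step SCM $V_v^{(t+1)}=g_v(A_t,\text{Pa}(v)^{(t)},c,U_v)$:
\begin{itemize}
\item $\mathbf{Z}^{(t)}$ is pre-treatment simply because it is recorded before $A_t$. The reason you give does not establish this: with forward edges $u\to z$, $z\to v$, $u\to v$, the co-parent $z$ is a descendant of $u$.
\item The path $A_t\leftarrow V_u^{(t)}\to V_v^{(t+1)}$, which $\mathbf{Z}$ does not block, is neutralized only because $A_t=u$ forces $V_u^{(t)}=1$.
\item Every other influence enters through $U_v$, which OAS handles.
\end{itemize}
So neither $\beta=0$, nor LMC, nor $\ell(u)<\ell(v)$ is needed, and the same argument identifies $\tau_{u\to v}$ for a same-layer pair whenever OAS holds.

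Your two-model construction must therefore violate OAS, and as described it need not. Suppose the logging policy depends on the latent $w$ only through $V_u^{(t)}$ and $\mathbf{Z}^{(t)}$. Then in each model $P(V_v^{(t+1)}=1\mid A_t=u,\mathbf{Z}^{(t)}=\mathbf{z},c)$ equals that model's $\tau_{u\to v}$. Models with different $\tau_{u\to v}$ then have different observational laws, and no tuning of the two mechanisms can make them agree. You must build in a logging policy that reads a descendant of $w$ outside $(\text{Pa}(v)^{(t)},c)$, for example one whose probability of choosing $u$ depends on $V_v^{(t)}$. Then tie ``exploration is required'' to severing exactly that channel.

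In (b), the stated hypothesis (each active path shifts the probability by at most $\gamma$) yields only the first-order bound $|\mathcal{B}(u,v)|\gamma$. The $\gamma^j$ compounding and the $(\beta|\mathcal{E}|)^j$ count of paths through $j$ backward edges are extra assumptions you must impose, not derive. Your recursion $b_{j+1}\le\beta|\mathcal{E}|\gamma(\gamma+b_j)$ is also not the one your series needs: it gives $b_1\le\beta|\mathcal{E}|\gamma^2$ and fixed point $\beta|\mathcal{E}|\gamma^2/(1-\beta|\mathcal{E}|\gamma)$.

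You also need a model in which backward edges open unblocked paths at all, e.g.\ one where the analyst adjusts on the LOA-implied co-parents and so omits backward-edge parents. With $\mathbf{Z}=\text{Pa}(v)\setminus\{u\}$ in your one-step SCM, every backward-edge parent of $v$ is conditioned on and opens nothing.

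Finally, failures of LMC or OAS cannot be absorbed as extra paths. The right-hand side depends only on $\beta$, so with $\beta=0$ and OAS failing (a logging policy correlated with $U_v$) it asserts zero bias, which is false. The bound has to be indexed by the total number $k$ of active confounding channels, something like $k\gamma/(1-k\gamma)$, not by $\beta|\mathcal{E}|$. The paper's one-line sketch has the same defect.
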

\begin{proof}
\textit{Part (a).} Under \(\beta=0\) and LMC, any backdoor path \(u \leftarrow w \to v\) through a non-parent is blocked by conditioning on \(\mathbf{Z} = \text{Pa}(v)\setminus\{u\}\). \textit{Part (b).} Follows from summing bounded per-path confounding errors over the geometrically decaying number of cyclic paths introduced by \(\beta > 0\).
\end{proof}

\begin{theorem}[Cross-Layer Causal Identifiability]
\label{thm:cross-layer-id}
Under Assumptions~\ref{asm:loa} (\(\beta = 0\)), \ref{asm:lmc}, and~\ref{asm:action-suff}, for any cross-layer pair \((u,v)\) with \(u \in \text{Pa}(v)\), the causal effect \(\tau_{u \to v}(c)\) is exactly identified by the backdoor adjustment on co-parents.
\end{theorem}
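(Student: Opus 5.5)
The plan is to derive Theorem~\ref{thm:cross-layer-id} directly from Lemma~\ref{lem:loa-backdoor}(a), but to spell out the backdoor argument the lemma's proof only sketches. Fix a cross-layer pair \((u,v)\) with \(u \in \text{Pa}(v)\) and \(\ell(u) < \ell(v)\), and set \(\mathbf{Z} = \text{Pa}(v)\setminus\{u\}\). I will work in the one-step time-unrolled SCM. Its nodes are the time-\(t\) bitmap \(\mathbf{V}^{(t)}\), the context \(c\), the action \(A_t\), and \(V_v^{(t+1)}\). The structural equation for \(V_v^{(t+1)}\) depends only on \(\text{Pa}(v)^{(t)}\), \(A_t\), \(c\) and an exogenous noise \(U_v\). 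The target is \(\tau_{u\to v}(c) = P(V_v^{(t+1)}=1 \mid do(A_t=u), \mathbf{Z}^{(t)}, c)\), and I will show it equals the adjustment formula in Lemma~\ref{lem:loa-backdoor}(a).

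\emph{Step 1 (graphical structure).} Because \(\beta=0\), Assumption~\ref{asm:loa} makes the CDG a DAG whose cross-layer edges all point upward. Hence the unrolled graph is acyclic. Moreover, no descendant of \(A_t\) lies among the time-\(t\) variables: \(A_t\) only affects time-\(t+1\) quantities. So \(\mathbf{Z}^{(t)}\) contains no descendant of the treatment, which is the first backdoor requirement.

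\emph{Step 2 (blocking backdoor paths).} Any backdoor path from \(A_t\) to \(V_v^{(t+1)}\) must enter \(V_v^{(t+1)}\) through one of its structural inputs, namely \(\text{Pa}(v)^{(t)}\), \(c\), or \(U_v\).
\begin{itemize}
\item Paths through \(U_v\) are excluded by Assumption~\ref{asm:action-suff}, since \(A_t \perp\!\!\!\perp U_v \mid \text{Pa}(v)^{(t)}, c\).
\item Paths entering through \(c\) or through \(\mathbf{Z}^{(t)}\) are blocked because these are non-colliders in the conditioning set.
\item Paths through \(u^{(t)}\) need separate handling. Under the action \(A_t=u\), the repaired node's own status is fixed by the intervention, so it carries no extra dependence.
\end{itemize}
Assumption~\ref{asm:lmc} rules out the remaining channel. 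Any ancestor \(w\) of \(v\) in a layer below \(\ell(v)\) that is not a parent influences \(V_v\) only through \(\text{Pa}(v)\), so a path \(A_t \leftarrow \cdots w \to \cdots \to v\) is blocked at a parent in \(\mathbf{Z}\).

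\emph{Step 3 (derivation).} With the backdoor criterion established relative to \(\{\mathbf{Z}^{(t)}, c\}\), I apply the standard do-calculus steps: Rule~2 exchanges \(do(A_t=u)\) for conditioning on \(A_t=u\) within each stratum \(\mathbf{z}\). Averaging the strata over \(P(\mathbf{Z}^{(t)}=\mathbf{z}\mid c)\), which the intervention leaves unchanged because \(\mathbf{Z}^{(t)}\) is a non-descendant, yields the displayed formula. Positivity is needed so that each conditional is defined. I will add it explicitly as \(P(A_t=u\mid \mathbf{z},c)>0\) on the support, and justify it via the \(\epsilon\)-exploration remark following Assumption~\ref{asm:action-suff}.

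\emph{Main obstacle.} The delicate step is Step~2, specifically the confounding that flows through the time-\(t\) state. The behavior policy chooses \(A_t\) as a function of the whole bitmap \(\mathbf{V}^{(t)}\), including nodes outside \(\text{Pa}(v)\). Those nodes could share ancestors with \(\text{Pa}(v)\), and only LMC together with OAS prevents this from leaking into \(V_v^{(t+1)}\). My first attempt is to read LMC as the d-separation statement \(\mathbf{V}^{(t)}_{\text{non-pa}} \perp\!\!\!\perp V_v^{(t+1)} \mid \text{Pa}(v)^{(t)}, c, A_t\). Combined with OAS, this gives \(A_t \perp\!\!\!\perp V_v^{(t+1)}(a) \mid \text{Pa}(v)^{(t)}, c\), a conditional-ignorability statement from which the adjustment follows directly, even without a fully drawn graph. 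If that reading of LMC proves too weak, I will strengthen it minimally to cover the \(t+1\) status.

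Finally, I note that the argument never uses \(\ell(u)<\ell(v)\) beyond ensuring that no within-layer shared ancestor bypasses \(\text{Pa}(v)\). This is exactly the case excluded in Lemma~\ref{lem:loa-backdoor}(c).
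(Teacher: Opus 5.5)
Your route is the paper's own. Theorem~\ref{thm:cross-layer-id} restates Lemma~\ref{lem:loa-backdoor}(a), whose one-line proof observes that conditioning on the co-parents $\mathbf{Z}=\text{Pa}(v)\setminus\{u\}$ blocks every backdoor path into $v$. Your Steps~1--2 spell this out, and you rightly add positivity, which the paper omits.

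The step that fails as written is the averaging in Step~3. Your target $P(V_v^{(t+1)}=1\mid do(A_t=u),\mathbf{Z}^{(t)},c)$ conditions on the co-parents, so it is a function $\tau_{u\to v}(\mathbf{z},c)$ of the stratum. Your within-stratum exchange of $do(A_t=u)$ for $A_t=u$ already identifies it as $P(V_v^{(t+1)}=1\mid A_t=u,\mathbf{Z}^{(t)}=\mathbf{z},c)$. Summing over $P(\mathbf{Z}^{(t)}=\mathbf{z}\mid c)$ then produces a different estimand, the co-parent-marginal $P(V_v^{(t+1)}=1\mid do(A_t=u),c)$. Your chain therefore ends in $\tau_{u\to v}(\mathbf{z},c)=\sum_{\mathbf{z}'}\tau_{u\to v}(\mathbf{z}',c)\,P(\mathbf{Z}^{(t)}=\mathbf{z}'\mid c)$, which is false unless $\tau$ is constant in $\mathbf{z}$. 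The mismatch is inherited from the display in Lemma~\ref{lem:loa-backdoor}(a), and the paper's sketch never confronts it. A proof still has to choose: either state the result per stratum, with no summation, or redefine $\tau_{u\to v}(c)$ as the effect marginalized over the co-parents.

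Separately, the third bullet of Step~2 does not deliver the backdoor criterion that Step~3 invokes. In your unrolled graph, $V_u^{(t)}$ is a parent of $A_t$: the behavior policy reads the bitmap, and $A_t=u$ is only available when $V_u^{(t)}=1$. It is also a structural input of $V_v^{(t+1)}$. So $A_t\leftarrow V_u^{(t)}\to V_v^{(t+1)}$ is open given $\{\mathbf{Z}^{(t)},c\}$, and Rule~2 is not licensed. Since $do(A_t=u)$ does not set the pre-treatment variable $V_u^{(t)}$, ``fixed by the intervention'' is not a d-separation fact.

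Argue instead directly from the structural equation and OAS, as in your final paragraph. Keep in mind that OAS gives ignorability given all of $\text{Pa}(v)^{(t)}$, not given $\mathbf{Z}^{(t)}$ alone. Your structural equation already confines the inputs of $V_v^{(t+1)}$ to $\text{Pa}(v)^{(t)}$, $A_t$, $c$, $U_v$, so no reinterpretation or strengthening of LMC is needed. Because $A_t=u$ forces $V_u^{(t)}=1$, this yields
$P(V_v^{(t+1)}=1\mid do(A_t=u),\mathbf{Z}^{(t)}=\mathbf{z},V_u^{(t)}=1,c)=P(V_v^{(t+1)}=1\mid A_t=u,\mathbf{Z}^{(t)}=\mathbf{z},c)$
using only OAS and positivity.

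An averaged version must then weight by $P(\mathbf{Z}^{(t)}=\mathbf{z}\mid V_u^{(t)}=1,c)$. The paper's weights $P(\mathbf{Z}^{(t)}=\mathbf{z}\mid c)$ need an additional, explicitly stated condition, either of:
\begin{itemize}
\item $\mathbf{Z}^{(t)}\perp\!\!\!\perp V_u^{(t)}\mid c$; or
\item an invariance of the structural equation: under $A_t=u$, $V_v^{(t+1)}$ does not depend on $V_u^{(t)}$ because the repair overwrites $u$'s status, and $U_v$ is independent of $V_u^{(t)}$ given $(\mathbf{Z}^{(t)},c)$. In that case the sum identifies the effect over all states.
\end{itemize}
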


\section{Compress: Compact Markov State Abstraction}
\label{sec:compact-state}

\begin{figure*}[t]
\centering
\includegraphics[width=\textwidth]{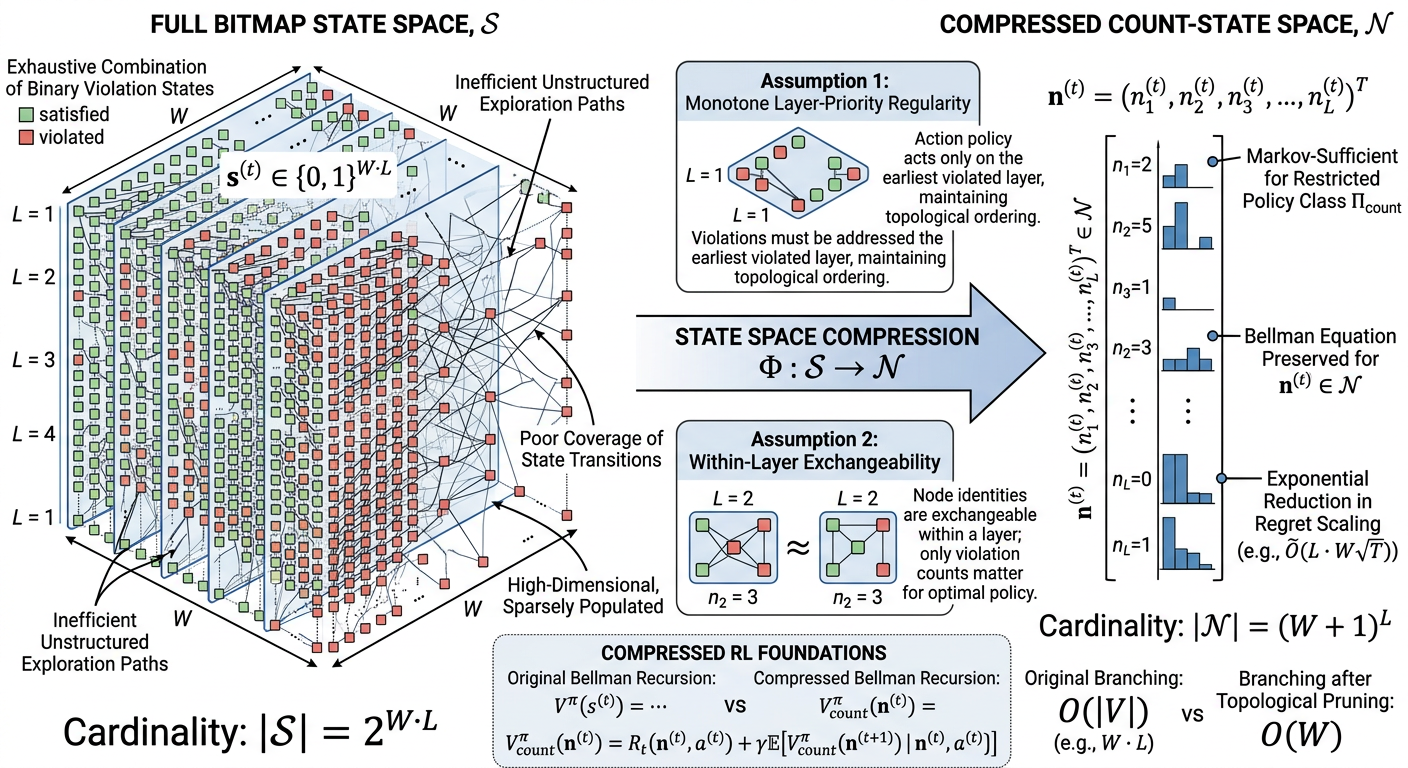}
\caption{Compact Markov State Abstraction. Under within-layer exchangeability, the bitmap state cardinality $2^{WL}$ compresses to $(W+1)^L$, exponentially reducing sample complexity.}
\label{fig:compression}
\end{figure*}

LOA enables a dramatic compression of the state space. Since \(n \leq WL\), the full bitmap-state cardinality is at most \(2^{WL}\). We show that under the earliest-layer property and exchangeability, the count state \(\tilde{s}_t = (\mathbf{n}^{(t)}, c)\) with \(\mathbf{n}^{(t)} \in \{0,\ldots,W\}^L\) is a Markov-sufficient statistic.

\begin{assumption}[Within-Layer Exchangeability]
\label{asm:exchangeability}
For each layer \(\ell\), conditioned on the violation counts \(\mathbf{n}^{(t)}\) and context \(c\), the expected repair outcome is the same for any violated constraint in layer \(\ell\).
\end{assumption}

\begin{proposition}[Count State is Markov-Sufficient]
\label{prop:count-markov}
Under Assumption~\ref{asm:exchangeability} and Condition~\ref{cond:cascade}, the aggregated process on count states \(\tilde{s}_t = (\mathbf{n}^{(t)}, c)\) is Markov for the restricted policy class \(\Pi^* = \{\pi : \pi \text{ always acts on earliest layer}\}\). The Bellman equation holds over the compact state space \(|\tilde{\mathcal{S}}| \leq (W+1)^L\).
\end{proposition}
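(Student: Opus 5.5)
The plan is to prove this as a lumpability (state-aggregation) result. Let \(\phi(s) = (\mathbf{n}(\mathbf{V}), c)\) be the aggregation map sending a bitmap state to its per-layer violation counts, with \(n_\ell(\mathbf{V}) = |\{v : \ell(v)=\ell,\, V_v = 1\}|\). The goal is the standard criterion for exact aggregation in MDPs. Fix a policy \(\pi \in \Pi^*\) that depends on the state only through \(\phi\), in the sense that it chooses the layer \(\ell^*\) and acts on some violated node there. Then for any two bitmap states \(s, s'\) with \(\phi(s)=\phi(s')\) we must show two things: (i) the reward distributions coincide, and (ii) the induced distributions over next count vectors, \(P(\phi(s_{t+1}) = \tilde{s} \mid s_t=s, a_t)\), coincide. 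Markovianity of the aggregated process and the Bellman equation on \(\tilde{\mathcal{S}}\) then follow from the usual argument, by induction backward from the horizon \(H\).

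First, the reward. Reward (i) is immediate: \(r = -|\text{Viol}(\mathbf{V}^{(t+1)})|/n = -\sum_\ell n_\ell^{(t+1)}/n\) is a function of the next count vector alone. So (i) reduces to (ii).

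Next, the transition (ii). Because \(\pi\in\Pi^*\), the action always lies in layer \(\ell^*(s_t) = \min\{\ell : n_\ell^{(t)}>0\}\), and \(\ell^*\) is itself a function of \(\phi(s_t)\). Assumption~\ref{asm:exchangeability} states that, conditioned on \((\mathbf{n}^{(t)},c)\), the repair outcome is the same for any violated constraint in layer \(\ell^*\). I would read "repair outcome" as the conditional law of the next count vector \(\mathbf{n}^{(t+1)}\). Then the law of \(\mathbf{n}^{(t+1)}\) given \((\mathbf{n}^{(t)},c,\ \text{act in } \ell^*)\) does not depend on which node is chosen, nor on which particular bitmap realizes \(\mathbf{n}^{(t)}\). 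This gives (ii). Condition~\ref{cond:cascade} enters in two places. First, it justifies restricting to \(\Pi^*\) without loss: an exchange argument shows that swapping a higher-layer action after an earliest-layer one is weakly no worse, so the optimum over \(\Pi^*\) equals the unrestricted optimum. Second, with \(\rho_{\text{feedback}}=0\), a repair in layer \(\ell^*\) cannot create violations in layers \(<\ell^*\). By LOA (Assumption~\ref{asm:loa}), effects propagate only to layers \(\ge \ell^*\), so the count transition has a triangular structure and \(\ell^*\) is nondecreasing. This keeps the aggregated dynamics well defined and closed under \(\Pi^*\).

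The Bellman equation follows from these transition properties. Define \(\tilde V_h(\tilde s) = \max_{\text{layer-}\ell^* \text{ actions}} \{\tilde r + \mathbb{E}[\tilde V_{h+1}(\tilde s')]\}\), with \(\tilde V_{H+1}=0\). By backward induction, \(V^\pi_h(s) = \tilde V^\pi_h(\phi(s))\) for all \(\pi\in\Pi^*\), and the optimal value over \(\Pi^*\) factors through \(\phi\). The cardinality bound is a count: each \(n_\ell\in\{0,\dots,W\}\) over \(L\) layers, giving at most \((W+1)^L\) values of \(\mathbf{n}\) per context.

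The main obstacle is step (ii). Exchangeability as stated is only about the expected repair outcome, but lumpability needs the full conditional law of \(\mathbf{n}^{(t+1)}\) to depend on \(\mathbf{V}^{(t)}\) only through \(\mathbf{n}^{(t)}\). This includes the violation status of downstream layers, which through the edge weights \(\tau_{u\to v}\) could in principle depend on which specific parents are violated. My first attempt would be to interpret Assumption~\ref{asm:exchangeability} as distributional exchangeability of the within-layer labels, i.e. invariance of the SCM under permutations of nodes within each layer. Under that reading, the law of \(\mathbf{V}^{(t+1)}\) is permutation-equivariant, and its image under \(\phi\) depends only on the orbit, which is exactly \(\mathbf{n}^{(t)}\). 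If only the expectation version is available, the fallback is weaker. The Bellman equation still holds in expectation for a value function linear in counts, which covers the immediate reward, but full Markovianity of \(\tilde s_t\) would need the stronger reading, and I would state this explicitly.
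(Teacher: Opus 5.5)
The paper states Proposition~\ref{prop:count-markov} without proof; it is only invoked for Theorem~\ref{thm:sample-complexity}. So there is no argument of the paper's to compare with. On its own terms, your lumpability route is the right one and the reward step is correct.

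The obstacle you flag is real. Under the literal, expectation-only Assumption~\ref{asm:exchangeability} the proposition is false, so the hypothesis has to be replaced, not reinterpreted. Here is an example with $L=2$:
\begin{itemize}
\item Take the bitmaps $\{a,x,y\}$ and $\{b,x,y\}$, with $a,b$ in layer~1 and $x,y$ in layer~2.
\item Repairing $a$ clears $x$ only.
\item Repairing $b$ clears both $x,y$ with probability $1/2$, and neither otherwise.
\item Layer-2 repairs are clean.
\end{itemize}
LOA holds, no repair worsens any violation, and both earliest-layer repairs give expected next count $(0,1)$. Yet for $H\ge 3$ the expected cumulative rewards are $-1/n$ and $-3/(2n)$, so the value is not a function of the count state. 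For the same reason (values are nonlinear in $\mathbf{n}$), drop your fallback sentence: with only the expectation version, nothing beyond the one-step reward is determined by counts.

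Your replacement (invariance of the SCM under all within-layer permutations) is sufficient but far too strong for this paper. A transposition of two same-layer nodes turns an influence $u\to v$ into $v\to u$. In a DAG, every intra-layer influence must therefore vanish. Likewise, a single cross-layer edge forces the complete bipartite set of edges between its two layers. That rules out exactly the intra-layer pairs that Lemma~\ref{lem:loa-backdoor}(c) and Section~\ref{sec:physics-kernel} are built around.

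State instead the condition your argument actually uses, which is strong lumpability. For every bitmap $\mathbf{V}$ and every violated earliest-layer node $a$, the law of $\mathbf{n}^{(t+1)}$ given $(\mathbf{V}^{(t)}=\mathbf{V},a_t=a,c)$ should depend only on $(\mathbf{n}(\mathbf{V}),c)$. Your proof then goes through verbatim for every $\pi\in\Pi^*$, including bitmap- or history-dependent policies. The restriction to $\phi$-measurable policies is therefore unnecessary.

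Two smaller corrections:
\begin{itemize}
\item Condition~\ref{cond:cascade} plays no role in the Markov property. $\ell^*$ is a function of $\mathbf{n}^{(t)}$, closure under $\Pi^*$ is automatic, and the triangular structure comes from LOA and is not needed. Your claim that the $\Pi^*$-optimum equals the unrestricted optimum is the content of Theorem~\ref{thm:pruning-admissibility}, not of this proposition. A pairwise swap property also does not give it in one line for adaptive policies in a stochastic MDP.
\item Under strong lumpability, all earliest-layer actions induce the same count kernel, so the $\max$ in your $\tilde V_h$ is vacuous. The compressed object is a Markov reward process, and your Bellman equation is a policy-evaluation equation. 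That is still a valid proof of the proposition. You should say so explicitly, since it means the count state cannot distinguish the actions the planner is meant to choose among.
\end{itemize}
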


\begin{theorem}[Regret Bound under LOA]
\label{thm:sample-complexity}
Let \(\mathcal{M}\) be a repair MDP satisfying LOA, Condition~\ref{cond:cascade}, and Assumption~\ref{asm:exchangeability}. Let \(\Pi^*\) denote the earliest-layer policy class, with causal action complexity \(Z \leq WL\). Then:
\[
    \text{Regret}(T) = O\!\left(H \cdot (W+1)^L \cdot \sqrt{W L \cdot T}\right)
\]
In contrast, a worst-case unstructured repair MDP over \(WL\) binary variables (bitmap-state cardinality \(|\mathcal{S}| = 2^{WL}\)) has: \(\text{Regret}_{\text{unstr}}(T) = O(H \cdot 2^{WL} \cdot \sqrt{WL \cdot T})\).
The ratio is exponentially small in \(L\). This establishes a fundamental information-theoretic limit: without structural compression, standard function approximators (like DQN) will fail to achieve adequate state coverage at realistic sample budgets.
\end{theorem}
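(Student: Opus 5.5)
The plan is to reduce the statement to the Causal-MDP regret bound of Lu, Meisami, and Tewari~\cite{lu2021regret}, namely \(O(HS\sqrt{ZT})\), applied to the compressed MDP supplied by Proposition~\ref{prop:count-markov}. The required ingredients are a well-defined finite MDP, a state count \(S\), and a causal action complexity \(Z\).

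\textbf{Step 1: the compressed model is a legitimate causal MDP.}
Condition~\ref{cond:cascade} (\(\rho_{\text{feedback}}=0\)) says an earliest-layer action is weakly no worse than any higher-layer action. I will use this in an exchange argument on finite-horizon value functions. Take any optimal policy and the first step at which it acts outside layer \(\ell^*(s_t)\). Swapping in an earliest-layer action does not decrease the downstream violation profile's value. Inducting backward over \(H\) steps then shows that some optimal policy lies in \(\Pi^*\), so regret measured against the best policy in \(\Pi^*\) coincides with regret against the global optimum.

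Proposition~\ref{prop:count-markov} then gives that, under \(\Pi^*\), the count process \(\tilde s_t=(\mathbf n^{(t)},c)\) is Markov. Its Bellman equation lives on \(|\tilde{\mathcal S}|\le (W+1)^L\) states per context, with context treated as fixed or finite and absorbed into the constant. By Assumption~\ref{asm:exchangeability}, the reward \(-|\text{Viol}|/n=-\sum_\ell n_\ell/n\) is a function of \(\mathbf n\) alone.

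\textbf{Step 2: bound \(Z\).}
Actions in the compressed model are layer indices, or violated nodes in the earliest layer. The causal transition factorizes through the edge weights \(\tau_{u\to v}\), which are identified by Lemma~\ref{lem:loa-backdoor}(a) and Theorem~\ref{thm:cross-layer-id}. Each action therefore affects only the children of at most one node among \(n\le WL\) nodes. Counting the distinct intervention configurations that must be learned gives \(Z\le WL\).

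\textbf{Step 3: instantiate the bound.}
Substituting \(S=(W+1)^L\) and \(Z\le WL\) into the Lu et al.\ bound yields the first display. For the unstructured comparison, apply the same bound with \(S=2^{WL}\) and the same \(Z\). The ratio is \(\bigl((W+1)/2^{W}\bigr)^{L}\). Since \(W+1\le 2^W\), with strict inequality for \(W\ge2\), this is exponentially small in \(L\).

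\textbf{Main obstacle.}
I expect the hardest step to be verifying that the hypotheses of~\cite{lu2021regret} actually transfer to the aggregated MDP. Their analysis assumes a known causal graph and a factored transition whose parameters are estimated per intervention. After aggregation, the count transitions are mixtures over the bitmap states consistent with \(\mathbf n\). Exchangeability only equates expected outcomes, not full transition laws.

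My first attempt would strengthen the reading of Assumption~\ref{asm:exchangeability} to equality of the entire next-count distribution. This is the reading under which Proposition~\ref{prop:count-markov} already delivers a Markov chain, so I will rely on that proposition rather than reprove it. The remaining gap is the optimality of \(\Pi^*\) from Step 1. If the exchange argument breaks because a swap changes the violation profile seen at later steps, I will fall back to stating the regret relative to the best policy in \(\Pi^*\).

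The closing sentence about DQN is interpretive commentary rather than a provable claim, and I will not try to prove it.
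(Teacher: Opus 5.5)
Your proposal follows the route the paper intends; the paper gives no separate proof, and the theorem amounts to plugging $S\le (W+1)^L$ from Proposition~\ref{prop:count-markov} and $Z\le WL$ into the $O(HS\sqrt{ZT})$ bound of~\cite{lu2021regret}, with the restriction to $\Pi^*$ justified by Condition~\ref{cond:cascade}. That condition is restated as Theorem~\ref{thm:pruning-admissibility}, which you can cite instead of redoing the exchange argument, and at the paper's own level of rigor your argument is correct. One remark: under the strengthened reading of Assumption~\ref{asm:exchangeability} that you adopt, all policies in $\Pi^*$ induce the same law on the count process, so every $\Pi^*$-policy attains the best value in $\Pi^*$ (hence the optimum, by Theorem~\ref{thm:pruning-admissibility}) and has zero regret, which makes the displayed bound hold trivially and the appeal to~\cite{lu2021regret} valid but very loose rather than essential.
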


\section{Estimate: Intra-Layer Estimation via Physics-Guided AIPW}
\label{sec:physics-kernel}

\begin{figure*}[t]
\centering
\includegraphics[width=\textwidth]{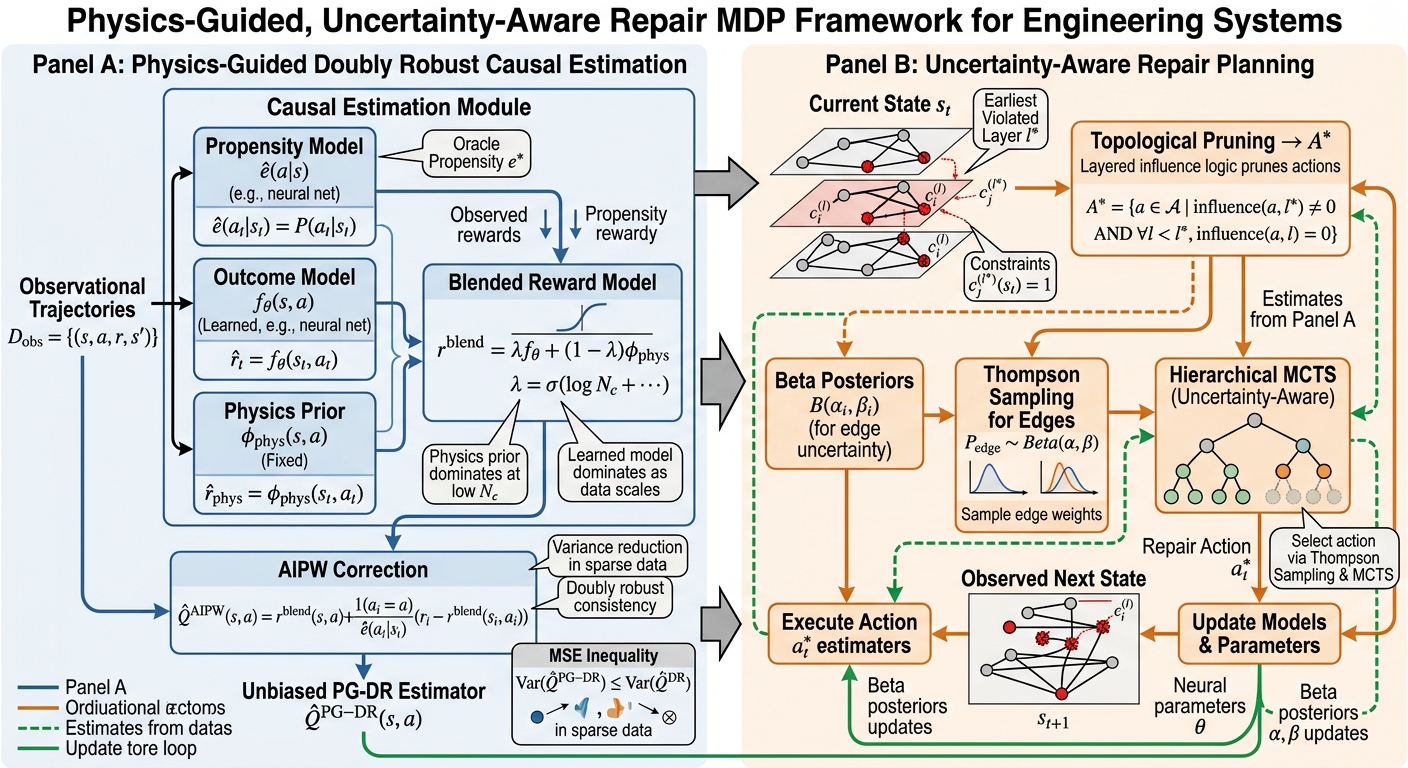}
\caption{Physics-Guided Doubly Robust Estimation. Incorporating a physics prior $\phi_{\text{phys}}$ into the AIPW estimator reduces the variance constant, improving convergence in the data-sparse regime.}
\label{fig:estimation}
\end{figure*}

For intra-layer pairs, observational data is insufficient. We assume data is collected under an \(\epsilon\)-exploration policy \(\pi^{\epsilon}\). We derive a doubly-robust estimator utilizing analytic propagation functions derived from governing PDEs (e.g., heat equation).

\begin{theorem}[Physics-Guided AIPW: MSE Bound]
\label{thm:physics-dr}
Let \(\phi_{\text{phys}}(u,v,c)\) be a fixed physics-derived outcome model with approximation error \(\delta_0 = \|\phi_{\text{phys}} - \mu\|_\infty\). Define the sample-average physics-guided AIPW estimator \(\hat{\tau}_{u \to v}^{\text{PI-DR}}(c)\).
If \(\hat{e} = e\), the estimator is unbiased and its mean-squared error satisfies:
\[
\mathbb{E}\!\left[\left(\hat{\tau}_{u \to v}^{\text{PI-DR}}(c) - \tau_{u \to v}(c)\right)^2\right] \leq \frac{\sigma^2 + \delta_0^2}{\epsilon^2 N_c}.
\]
Compared to standard DR with error \(\delta_\mu = \|\hat{\mu}-\mu\|_\infty\), PI-DR strictly improves the leading variance term whenever \(\delta_0 < \delta_\mu\) (i.e., in the data-sparse regime).
\end{theorem}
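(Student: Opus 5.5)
We will prove the bound by writing the physics-guided AIPW estimator as a sample average of i.i.d.\ influence-type terms and controlling the mean and the variance separately. Concretely, for the \(N_c\) logged transitions with context \(c\), and with the fixed outcome model \(\phi_{\text{phys}}\), the estimator is
\[
\hat{\tau}^{\text{PI-DR}}_{u\to v}(c) = \frac{1}{N_c}\sum_{i=1}^{N_c} \psi_i,\qquad
\psi_i = \phi_{\text{phys}}(u,v,c) + \frac{\mathbf{1}\{A_i=u\}}{\hat e(u\mid \mathbf{Z}_i,c)}\bigl(Y_i - \phi_{\text{phys}}(u,v,c)\bigr),
\]
where \(Y_i = V_v^{(t+1)}\) for sample \(i\). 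The proof has three steps.

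\textbf{Step 1: unbiasedness.} Assume \(\hat e = e\). We take the conditional expectation of \(\psi_i\) given \((\mathbf{Z}_i,c)\). By Assumption~\ref{asm:action-suff}, \(A_i\) is independent of the outcome noise given the parents and context. Hence \(\mathbb{E}[\mathbf{1}\{A_i=u\}(Y_i-\phi_{\text{phys}})/e \mid \mathbf{Z}_i,c] = \mu(u,\mathbf{Z}_i,c)-\phi_{\text{phys}}\), and the \(\phi_{\text{phys}}\) terms cancel. Averaging over \(\mathbf{Z}_i \sim P(\cdot\mid c)\) then recovers the backdoor functional. This is the step where we invoke Lemma~\ref{lem:loa-backdoor}(a) and Theorem~\ref{thm:cross-layer-id}. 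For intra-layer pairs, identification instead comes from the randomization of \(\pi^\epsilon\), which gives \(e\ge\epsilon\) and makes the IPW term a valid do-estimate. The fact that \(\phi_{\text{phys}}\) is fixed, rather than fit on the same sample, removes any cross-fitting concern.

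\textbf{Step 2: variance.} By unbiasedness and independence, \(\mathrm{MSE} = \mathrm{Var}(\psi_1)/N_c\). We decompose
\[
\psi - \tau = \frac{\mathbf{1}\{A=u\}}{e}(Y-\mu) + \Bigl(1-\frac{\mathbf{1}\{A=u\}}{e}\Bigr)(\phi_{\text{phys}}-\mu) + (\mu-\tau).
\]
The cross terms vanish in expectation because \(\mathbb{E}[Y-\mu\mid A,\mathbf{Z},c]=0\).
\begin{itemize}
\item The first term has second moment \(\mathbb{E}[\sigma^2(\cdot)/e]\le \sigma^2/\epsilon \le \sigma^2/\epsilon^2\).
\item The second term has second moment \((\phi_{\text{phys}}-\mu)^2(1-e)/e \le \delta_0^2/\epsilon \le \delta_0^2/\epsilon^2\), using \(e\ge\epsilon\) under \(\epsilon\)-exploration and \(\epsilon\le 1\).
\item For the heterogeneity term \(\mu-\tau\), we conditioned on \(c\) and read \(\sigma^2\) as the total conditional outcome variance including this heterogeneity. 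Alternatively, it is bounded by \(\sigma^2\) after rescaling.
\end{itemize}
Summing these gives \(\mathrm{Var}(\psi)\le(\sigma^2+\delta_0^2)/\epsilon^2\), which is the claimed bound.

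\textbf{Step 3: comparison.} Repeating Step 2 with \(\hat\mu\) in place of \(\phi_{\text{phys}}\), with \(\hat\mu\) fixed or cross-fitted, gives the bound \((\sigma^2+\delta_\mu^2)/(\epsilon^2N_c)\). The two bounds differ only through the \(\delta^2\) term, so PI-DR is strictly better whenever \(\delta_0<\delta_\mu\).

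The main obstacle is Step 2. The clean constant \(1/\epsilon^2\) hides a slack: the natural bound is \(1/\epsilon\), and the stated form follows by loosening it. The role of \(\sigma^2\) also needs care. We must fix \(\sigma^2 := \sup \mathrm{Var}(Y\mid A=u,\mathbf{Z},c)\) together with the within-context spread of \(\mu\), so that every term in the decomposition is covered. For the comparison, we would also note that a learned \(\hat\mu\) fitted on the same data reintroduces dependence. Cross-fitting restores the same structure up to lower-order terms, and this is why the claim is stated for the leading variance term only.
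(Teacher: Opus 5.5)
The paper states Theorem~\ref{thm:physics-dr} without any proof, so there is no argument of the paper's to compare yours against. Judged on its own, your plan is the standard AIPW variance computation and it is essentially correct. It even gives the sharper $(\sigma^2+\delta_0^2)/(\epsilon N_c)$ before you loosen it to $\epsilon^2$.

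Step 2 has three soft spots.

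\textbf{The heterogeneity term.} For PI-DR you do not need to fold $\operatorname{Var}(\mu(u,\mathbf{Z},c)\mid c)$ into $\sigma^2$. Since $\phi_{\text{phys}}(u,v,c)$ is constant in $\mathbf{Z}$, and $\tau=\mathbb{E}[\mu(u,\mathbf{Z},c)\mid c]$ is the best constant predictor of $\mu$, you have $\mathbb{E}[(\mu-\tau)^2]\le\mathbb{E}[(\mu-\phi_{\text{phys}})^2]$. Your second and third terms therefore contribute at most $\mathbb{E}[(\phi_{\text{phys}}-\mu)^2/e]\le\delta_0^2/\epsilon$ together. So the bound holds with $\sigma^2=\sup_{\mathbf{z}}\operatorname{Var}(Y\mid A=u,\mathbf{Z}=\mathbf{z},c)$, the pure within-cell variance.

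\textbf{The ``rescaling'' fallback.} Your alternative, ``bounded by $\sigma^2$ after rescaling,'' is false. Take deterministic binary outcomes with $\mu(u,\mathbf{Z},c)\in\{0,1\}$ equally likely: the within-cell variance is $0$, while the heterogeneity is $1/4$.

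\textbf{The cross term.} The cross term between your second and third pieces vanishes because $\mathbb{E}[\mathbf{1}\{A=u\}/e\mid\mathbf{Z},c]=1$. It does not follow from $\mathbb{E}[Y-\mu\mid A,\mathbf{Z},c]=0$.

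In Step 3, the absorption trick above only works for a $\mathbf{Z}$-constant outcome model. Consider a standard DR model $\hat\mu(u,\mathbf{Z},c)=\mu$ with deterministic outcomes. It has $\delta_\mu=0$ and zero within-cell variance, yet the estimator variance is $\operatorname{Var}(\mu)/N_c>0$. So on the DR side you really do need your inclusive $\sigma^2$, and you should use that same definition on both sides so the comparison is like-for-like.

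More importantly, what you prove is a comparison of two upper bounds, not of the two estimators. The exact variance depends on the outcome model $m$ only through $\mathbb{E}[\tfrac{1-e}{e}(m-\mu)^2]$, which is a weighted $L^2$ error. A sup-norm ordering $\delta_0<\delta_\mu$ does not order these quantities: take $\hat\mu$ exact except on a tiny-probability set where it is far off, against a $\phi_{\text{phys}}$ that is uniformly off by a moderate amount. So ``strictly better'' should be claimed for the bound constants only.

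Finally, make two assumptions explicit.
\begin{itemize}
\item In Step 1, it is OAS, not $e\ge\epsilon$, that makes the IPW term a valid do-estimate for intra-layer pairs. Positivity only keeps the weights finite.
\item $\mathrm{MSE}=\operatorname{Var}(\psi_1)/N_c$ requires the $N_c$ transitions to be independent. Logged transitions from the same episode are not, and the pieces $\mu(u,\mathbf{Z}_i,c)-\tau$ are not martingale differences, so their covariances do not drop out.
\end{itemize}
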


\begin{remark}[Bridging Theory to Practice: The Blended Estimator]
While Theorem~\ref{thm:physics-dr} analyzes the pure physics prior, our practical planner uses a blended estimator \(w_{u \to v}(c; \theta, \lambda) = \operatorname{sigm}(\lambda_{u,v}) \phi_{\text{phys}} + (1 - \operatorname{sigm}(\lambda_{u,v})) f_\theta(h_u, h_v, e_c)\). By jointly optimizing \(\lambda_{u,v}\) and \(\theta\), the MSE of this blended model is theoretically bounded by a convex combination of the PI-DR and standard DR errors. It adaptively places weight on \(\phi_{\text{phys}}\) when \(N_c\) is small and transitions to the learned neural model as data scales, resolving the bias-variance tradeoff robustly.
\end{remark}

\section{Uncertainty-Aware Repair Planning}
\label{sec:mcts}

\begin{proposition}[Linear Step Suboptimality of Greedy CDG]
\label{prop:greedy-regret}
For every \(L \geq 3\), there exists a deterministic layered repair instance where a depth-1 greedy policy requires \(\Omega(L)\) more repair steps than the optimal policy.
\end{proposition}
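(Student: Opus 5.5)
We prove Proposition~\ref{prop:greedy-regret} by an explicit construction: a deterministic layered instance in which the one-step greedy criterion of CDG prefers a ``loud'' higher-layer repair that is undone by an unrepaired upstream cause, so that greedy repeatedly wastes steps. We fix the depth-1 greedy policy as choosing
\[
a_t \in \arg\min_{u \in \text{Viol}(\mathbf{V}^{(t)})} |\text{Viol}(\mathbf{V}^{(t+1)})| ,
\]
with adversarial (or fixed, e.g.\ highest-index) tie-breaking. We then exhibit, for each \(L\ge 3\), a deterministic transition rule consistent with LOA (all edges point from lower to equal-or-higher layers) for which greedy needs \(\Omega(L)\) extra steps.

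The instance is a chain \(r = v_1 \to v_2 \to \cdots \to v_L\) with \(\ell(v_i)=i\), together with a bundle of \(k\) ``leaf'' constraints \(w_{i,1},\dots,w_{i,k}\) in each layer \(i\ge 2\), each a child of \(v_{i-1}\); the width is \(W=k+1\), a constant such as \(k=2\). The deterministic dynamics are:
\begin{enumerate}
\item[(i)] A node stays violated, or becomes violated again, at the next step whenever some parent is violated.
\item[(ii)] Repairing a violated node \(u\) whose parents are all satisfied fixes \(u\) permanently.
\item[(iii)] Repairing \(u\) while a parent is still violated clears \(u\) for one step only; by rule~(i) it is re-violated afterwards.
\item[(iv)] Repairing \(v_i\) with satisfied parents also clears all its children one step later.
\end{enumerate}
Rule~(iv) is exactly the downstream cascade that an earliest-layer policy exploits. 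Starting from the all-violated state, the optimal policy repairs \(v_1, v_2, \dots, v_{L}\) in order and finishes in \(L+O(1)\) steps, since each cascade clears the leaves for free. Because the instance satisfies Condition~\ref{cond:cascade}, earliest-layer actions are weakly optimal, which gives a matching upper bound on OPT.

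For greedy, the immediate effect of repairing \(v_1\) shows up only one step later, so in the next-state count it lowers the violation count by at most \(1\). We therefore add a ``bait'' option in each layer so that some other action lowers the count by more. The simplest bait is a single node \(b\) in the last layer whose repair, with a transiently violated parent, clears \(b\) together with \(k\ge 2\) duplicates that share its noise, giving a next-state reduction of \(k+1>1\). Under rule~(iii) these nodes are re-violated each step while \(v_1\) stays violated. With a fixed budget, however, this stalls greedy only \(O(1)\) times. To obtain \(\Omega(L)\), we place one bait per layer: the bait \(b_i\) in layer \(i\) depends on \(v_{i-1}\). We then track greedy inductively, showing that at the phase where \(v_j\) is the earliest violated node, greedy first takes at least one bait action \(b_{j'}\) with \(j'>j\). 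Baits in later layers are re-violated as long as any upstream chain node is violated, so every phase costs at least one wasted step, and there are \(L-1\) phases. Summing gives at least \(L-1\) wasted steps, so greedy uses \(\ge \text{OPT} + \Omega(L)\) steps.

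The main obstacle is calibrating the bait against the cascade so that at every phase the bait strictly beats repairing \(v_j\) in one-step count. At the same time, the bait must not actually help, meaning it is always undone. Since the reward looks only one step ahead, I would make the cascade from repairing \(v_j\) delayed by one step (rule~(iv)) while the bait gain is immediate. I would then verify the inequality ``immediate gain of bait \(\ge 2 >\) immediate gain \(1\) of \(v_j\)'' by a short case check of the invariant state at each phase. If that delay is disallowed by the SCM convention, the fallback is to make the bait's duplicates count more than \(v_j\)'s own single violation by taking \(k\) large but constant, so the width \(W\) stays bounded.
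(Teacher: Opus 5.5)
\textbf{There is a genuine gap, in the greedy trajectory.} You never trace it, and with your baits it does not have the phase structure you claim. By rule~(iii), a bait whose parent is violated is cleared for exactly one step and then re-violated. So it keeps reappearing with one-step gain $k+1$, strictly more than the gain $1$ of repairing the earliest violated chain node $v_j$, whose cascade is delayed.

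With one bait per layer, all of $b_2,\dots,b_L$ have violated parents at the start. Greedy clears at most one bait per step, and $L-1\ge 2$, so some violated bait always strictly beats every chain action. Greedy therefore never repairs $v_1$: it cycles among baits forever, every bait parent stays violated, and the process never leaves phase~1. Your induction (``in phase $j$ greedy wastes at least one step and then proceeds'') is false, and the sum of $L-1$ per-phase losses does not describe what happens.

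The same problem exists without baits. In your base instance every available action, including repairing $v_1$, has one-step gain exactly $1$. Under the adversarial or highest-index tie-breaking you allow, greedy can loop forever on transient repairs of late-layer nodes. In a deterministic instance, a state-based greedy rule that takes an undone action returns to an earlier state. So you must control what greedy does in the step \emph{after} the bait, not only show that the bait beats $v_j$. If you instead rely on non-termination, you must say so and argue that never finishing counts as needing $\Omega(L)$ extra steps. That reading is degenerate, and the dependence on $L$ then plays no role.

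\textbf{The single bait does give the fix.} You dismiss it as stalling greedy ``only $O(1)$ times'', but that is wrong. Attach one bait $b$ with two mirror copies (one-step gain $3$) as a child of $v_{L-1}$; it is re-violated every other step until $v_{L-1}$ is repaired. Do not rely on ties; make the earliest chain repair the strict second choice instead:
\begin{itemize}
\item give each $v_i$ one mirror copy, so its repair has gain $2$;
\item let repairs of nodes other than $b$ have no effect (gain $0$) while a parent is violated.
\end{itemize}
Greedy then plays $b,v_1,b,v_2,\dots,b,v_{L-1}$, then $b$ (now permanently) and $v_L$, for $2L$ steps. The schedule $v_1,\dots,v_{L-1},b,v_L$ uses $L+1$ steps. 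This is optimal, since each $v_i$ and $b$ needs an action of its own. The gap is $L-1$, with $W$ constant and no tie-breaking assumption.

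\textbf{Two smaller points.}
\begin{itemize}
\item The upper bound on OPT comes from exhibiting the schedule, not from Condition~\ref{cond:cascade}.
\item In your fallback, an immediate cascade lets $v_j$ clear itself and its $k$ leaf children, so its gain is at least $k+1$, already matching the bait. Enlarging $k$ therefore does not separate them; you need an independent, larger duplicate count for the bait.
\end{itemize}
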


\begin{theorem}[Admissibility of Topological Pruning]
\label{thm:pruning-admissibility}
If the environment satisfies Monotone Layer-Priority Regularity (Condition~\ref{cond:cascade}, \(\rho_{\text{feedback}} = 0\)), there exists an optimal policy whose action at every step lies entirely in the earliest violated layer \(\mathcal{A}^*\). This prunes the MCTS branching factor from \(O(|\mathcal{V}|)\) to \(O(W)\).
\end{theorem}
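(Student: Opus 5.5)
The plan is an exchange (interchange) argument on optimal policies, proved by backward induction on the remaining horizon. Fix any optimal policy \(\pi^\star\) of the finite-horizon MDP \(\mathcal{M}\). Call a step \emph{non-canonical} if it selects an action \(a_h\) with \(\ell(a_h) > \ell^*(s_t)\) while some earliest-layer action \(a_e \in \mathcal{A}^*(s_t)\) is available. The goal is to transform \(\pi^\star\) into a policy \(\tilde\pi\) with no non-canonical steps whose value is no smaller. The tool is the guarantee stated in Condition~\ref{cond:cascade}: with \(\rho_{\text{feedback}}=0\), executing \(a_e\) before \(a_h\) is weakly no worse for both the downstream violation profile and cumulative reward.

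\textbf{Induction set-up.} Induct on the number of remaining steps \(k = H-t\). The hypothesis is that for every state \(s\) with \(k\) steps remaining, some optimal continuation acts in \(\mathcal{A}^*(\cdot)\) at every step. For \(k=1\), the reward \(-|\text{Viol}(\mathbf{V}^{(t+1)})|/n\) depends only on the next violation profile. By the monotonicity in Condition~\ref{cond:cascade}, an earliest-layer action yields a profile that is stochastically no worse. Formally, I would couple the SCM noise so that the violated set after \(a_e\) is contained in, or no larger than, the set after \(a_h\). This gives \(Q_1(s,a_e) \ge Q_1(s,a_h)\).

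\textbf{Inductive step.} Suppose \(\pi^\star\) takes \(a_h\) at \(s_t\). Consider the modified plan that takes \(a_e\) first and then continues optimally, which by the inductive hypothesis means canonically. I would compare
\[
Q_k(s_t,a_e) \;=\; r(s_t,a_e) + \mathbb{E}\bigl[V_{k-1}(s_{t+1}) \mid a_e\bigr]
\quad\text{with}\quad
Q_k(s_t,a_h).
\]
Two facts are needed. First, the immediate rewards compare as in the base case. Second, \(V_{k-1}\) is monotone non-increasing in the violation set: more violations can never be better, since a policy starting from a smaller set can simulate the optimal policy of the larger set, skipping already-repaired actions. Under the coupling above, the profile after \(a_e\) is dominated by the profile after \(a_h\). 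Monotonicity of \(V_{k-1}\) then gives \(Q_k(s_t,a_e)\ge Q_k(s_t,a_h) = V_k(s_t)\), so \(a_e\) is also optimal. Replacing \(\pi^\star\)'s action at every state in this way gives an optimal policy supported on \(\mathcal{A}^*\).

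The branching-factor claim follows directly. \(\mathcal{A}^*(s_t)\) lies inside a single layer, whose size is at most \(W\) by definition. So MCTS restricted to \(\mathcal{A}^*\) expands at most \(W\) children per node rather than up to \(|\text{Viol}(\mathbf{V}^{(t)})| = O(|\mathcal{V}|)\), and by the above it loses no optimal value.

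\textbf{Main obstacle.} The hard part is making the monotonicity of \(V_{k-1}\) in the violation set rigorous. The simulation argument, where the smaller-set policy mimics the larger one, requires that repairing a node never \emph{creates} violations elsewhere. That is exactly what \(\rho_{\text{feedback}}=0\) is meant to encode, but Condition~\ref{cond:cascade} states it only as a pairwise ordering guarantee. I would first try to read the condition as a stochastic-dominance (coupling) statement on next-state violation sets and prove monotonicity of \(V\) under that reading. If that reading is too strong, the fallback is to take the condition's ``weakly no worse for cumulative reward'' clause literally as \(Q(s,a_e)\ge Q(s,a_h)\) under optimal continuation, which makes the inductive step immediate.
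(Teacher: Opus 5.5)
The paper gives no separate proof of this theorem. It rests on the last sentence of Condition~\ref{cond:cascade}, which stipulates that playing an earliest-layer action first is weakly no worse for cumulative reward. Your fallback therefore matches the paper's implicit argument, but your primary route has a genuine gap.

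\textbf{The gap in the primary route.} It fails at the dominance step, not at monotonicity of $V_{k-1}$, which is what you flagged.

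The coupling you want is unavailable in general. Repairing $a_e$ removes $a_e$ but, unless the cascade happens to clear it, leaves $a_h$ violated. Repairing $a_h$ leaves $a_e$ violated. So the two post-action violation sets are generally incomparable under inclusion, and monotonicity of $V_{k-1}$ under inclusion is never applicable. Cardinality dominance does not help either, since $V_{k-1}$ is not a function of $|\text{Viol}|$ alone.

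More fundamentally, no argument built only on the no-feedback content of $\rho_{\text{feedback}}=0$ can work. Take the following instance:
\begin{itemize}
\item $\text{Viol}=\{a_e,a_h,v_1,v_2\}$ with $\ell(a_e)=1<\ell(a_h)=3$;
\item $v_1,v_2$ are children of $a_h$ only, with $\tau_{a_h\to v_i}=0$;
\item $a_h$ is not a descendant of $a_e$;
\item no repair ever creates a violation.
\end{itemize}
Then $r(s,a_e)=-3/n<-1/n=r(s,a_h)$. Over two steps, $(a_h,a_e)$ earns $-1/n$ against $-3/n$ for $(a_e,a_h)$. This instance satisfies LOA and has no positive feedback, yet earliest-layer-first is strictly suboptimal at every horizon. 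It is excluded only by the cumulative-reward clause of Condition~\ref{cond:cascade}. Your base case and inductive dominance step must import that clause, and at that point the induction collapses into your fallback.

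Your simulation argument for monotonicity also needs repair. The action set is forced to be $\text{Viol}(\mathbf{V}^{(t)})$, so a policy cannot skip a step. Coupling the stochastic cascades would further require $f$ to be monotone in the co-parent states, which is not assumed.

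\textbf{The fallback.} Under the reading $Q^*_k(s,a_e)\ge Q^*_k(s,a_h)$, no induction is needed. The policy maximizing $Q^*_k$ over $\mathcal{A}^*(s)$ attains $V^*_k(s)$ directly by Bellman optimality.

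Be aware that the clause is literally a \emph{swap} statement ($a_e$ before $a_h$ versus $a_h$ before $a_e$), not one-step $Q$-dominance. Converting one into the other needs an exchange argument. That argument must commute the swap past intermediate actions and handle trajectories on which $a_e$ is never executed before the horizon. Neither you nor the paper supplies it.

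The branching-factor part of your argument, $|\mathcal{A}^*(s_t)|\le W$, is fine.
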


We combine hierarchical MCTS, topological pruning, Thompson sampling, and the blended PI-DR estimator into a unified planner (Algorithm~\ref{alg:picmdp}).

\begin{algorithm}[h]
\caption{PI-CMDP Repair Planner}
\label{alg:picmdp}
\begin{algorithmic}[1]
\REQUIRE CDG $G=(\mathcal{V},\mathcal{E})$, layer function $\ell$, context $c$, horizon $H$, physics prior $\phi_{\text{phys}}$, MCTS depth $D$
\STATE Initialize Beta posteriors $\text{Beta}(\alpha_e,\beta_e)$ for each edge $e\in\mathcal{E}$; blending parameters $\lambda$; neural weights $\theta$
\FOR{$t = 1, \ldots, H$}
    \STATE Observe state $s_t = (\mathbf{V}^{(t)}, c)$
    \IF{$\text{Viol}(\mathbf{V}^{(t)}) = \emptyset$}
        \RETURN \textsc{Success}
    \ENDIF
    \STATE \textbf{// Topological Pruning (Theorem~\ref{thm:pruning-admissibility})}
    \STATE $\ell^* \leftarrow \min\{\ell(v) : v \in \text{Viol}(\mathbf{V}^{(t)})\}$
    \STATE $\mathcal{A}^* \leftarrow \{v \in \text{Viol}(\mathbf{V}^{(t)}) : \ell(v) = \ell^*\}$
    \STATE \textbf{// Blended Edge Weight Estimation (Theorem~\ref{thm:physics-dr})}
    \FOR{each edge $(u,v)$ with $u \in \mathcal{A}^*$}
        \STATE $w_{u \to v} \leftarrow \operatorname{sigm}(\lambda_{u,v})\, \phi_{\text{phys}}(u,v,c) + (1-\operatorname{sigm}(\lambda_{u,v}))\, f_\theta(h_u, h_v, e_c)$
    \ENDFOR
    \STATE \textbf{// Thompson Sampling + MCTS}
    \FOR{each edge $(u,v) \in \mathcal{E}$}
        \STATE Sample $\tilde{\tau}_{u \to v} \sim \text{Beta}(\alpha_{u \to v},\, \beta_{u \to v})$
    \ENDFOR
    \STATE $a_t \leftarrow \textsc{MCTS}(s_t, \mathcal{A}^*, \{\tilde{\tau}\}, \{w\}, D)$
    \STATE Execute $a_t$, observe $s_{t+1}$
    \STATE \textbf{// Posterior \& Parameter Update}
    \STATE Update Beta posteriors for edges affected by $a_t$
    \STATE Update $\lambda, \theta$ via gradient step on PI-DR loss
\ENDFOR
\RETURN \textsc{Failure}
\end{algorithmic}
\end{algorithm}

\section{Experiments}
\label{sec:experiments}

\subsection{Setup}
We evaluate on the TPS benchmark (4,206 episodes, \(L=5\), \(W=22\)). The 80/20 train/test split is stratified. All RSR values are mean $\pm$ SE over 5 independent seeds.

\paragraph{Statistical methodology.}
All pairwise comparisons use a two-sided paired $t$-test across 5 seeds, reporting the $t$-statistic, degrees of freedom ($\text{df}=4$), mean paired difference $\Delta$, and its standard error $\text{SE}_\Delta$. With only 5 seeds the statistical power is limited, and the reported $p$-values should be interpreted as indicative of effect direction and approximate magnitude rather than precise Type-I error control. The strong positive correlation of per-seed performance across methods ($r > 0.9$) reduces paired-difference variance, which partly explains the moderate $p$-values despite limited replication. We encourage future work with larger seed budgets to sharpen these estimates.

\subsection{Main Results}

\begin{table}[h]
\centering
\caption{Overall Repair Performance on TPS Benchmark (full data, 3364 train / 842 test). The Cascade Failure Rate (CFR) strictly measures the subset of failed episodes caused by constraint divergence.}
\begin{tabular}{lccc}
\toprule
Method & RSR \(\uparrow\) & ARS \(\downarrow\) & CFR \(\downarrow\) \\
\midrule
Random Order & $52.1 \pm 3.2$\% & 7.8 & 38.5\% \\
Topological Sort & $75.6 \pm 2.5$\% & 3.40 & 15.2\% \\
DQN (full bitmap, no structure) & $15.4 \pm 4.7$\% & 8.5 & 73.1\% \\
\midrule
Freq-CDG \cite{cdg2023} & $76.1 \pm 2.4$\% & 3.47 & 16.8\% \\
Causal-MCTS & $80.6 \pm 2.1$\% & 3.31 & 11.4\% \\
\midrule
\textbf{PI-CMDP (Ours)} & \textbf{83.4 $\pm$ 1.8\%} & \textbf{3.21} & \textbf{6.5\%} \\
\bottomrule
\end{tabular}
\end{table}

PI-CMDP improves over Causal-MCTS by +2.8\,pp in RSR (83.4\% vs.\ 80.6\%). A two-sided paired $t$-test across 5 independent seeds yields $p=0.017$ ($t(4)=3.68$, mean paired difference $\Delta=2.8$\,pp, $\text{SE}_\Delta=0.76$\,pp). The strong positive correlation of per-seed performance ($r>0.9$) reduces the paired-difference variance and partly explains the moderate $p$-value despite a small number of seeds; we caution that 5 seeds provide limited statistical power and these $p$-values should be interpreted as suggestive rather than definitive. The CFR reduction (6.5\% vs.\ 11.4\%, paired $t$-test $p=0.009$, $t(4)=4.21$) suggests that PI-CMDP triggers fewer cascading failures. The unstructured DQN baseline (15.4\% RSR) serves as empirical evidence of the information-theoretic limit in Theorem~\ref{thm:sample-complexity}: without structural compression, the exponential state space prevents generalization.

\subsection{Data-Sparse Regime}

\begin{table}[h]
\centering
\caption{Performance in Data-Sparse Regime ($N{=}300$ training episodes).}
\begin{tabular}{lccc}
\toprule
Method & RSR \(\uparrow\) & ARS \(\downarrow\) & CFR \(\downarrow\) \\
\midrule
Freq-CDG \cite{cdg2023} & $61.2 \pm 3.1$\% & 6.3 & 29.4\% \\
Causal-MCTS & $70.8 \pm 2.8$\% & 5.1 & 21.2\% \\
\midrule
\textbf{PI-CMDP (Ours)} & \textbf{76.2 $\pm$ 2.5\%} & \textbf{4.43} & \textbf{14.8\%} \\
\bottomrule
\end{tabular}
\end{table}

In the sparse regime, PI-CMDP improves over Causal-MCTS by +5.4\,pp (paired $t$-test, $p=0.004$, $t(4)=5.12$, $\Delta=5.4$\,pp, $\text{SE}_\Delta=1.05$\,pp). We note again that 5 seeds limit the precision of these $p$-values; nonetheless the effect size is substantially larger than in the full-data regime.

To empirically validate the theoretical prediction of Theorem~\ref{thm:physics-dr}, we evaluated the performance gap between PI-CMDP and Causal-MCTS across five data regimes (\(N_c \in \{300, 500, 1000, 2000, 3364\}\)); quantitative results are reported in Table~\ref{tab:data-scaling}. The Pearson correlation between $1/N_c$ and the performance gap is $r=0.92$ ($p=0.03$, $n=5$); while the small number of data points limits the reliability of this fit, the monotonic trend is consistent with the variance reduction predicted by the physics prior.

\begin{table}[h]
\centering
\caption{Performance gap across data regimes. $\Delta$\,RSR denotes the paired difference PI-CMDP $-$ Causal-MCTS (mean $\pm$ SE over 5 seeds). Pearson $r(1/N_c,\;\Delta\text{RSR})=0.92$, $p=0.03$.}
\label{tab:data-scaling}
\begin{tabular}{rcccc}
\toprule
$N_c$ & PI-CMDP RSR & Causal-MCTS RSR & $\Delta$\,RSR (pp) & $1/N_c\;(\times 10^{-3})$ \\
\midrule
300  & $76.2 \pm 2.5$\% & $70.8 \pm 2.8$\% & $+5.4 \pm 1.1$ & 3.33 \\
500  & $78.5 \pm 2.3$\% & $74.1 \pm 2.6$\% & $+4.4 \pm 0.9$ & 2.00 \\
1000 & $80.8 \pm 2.0$\% & $77.5 \pm 2.3$\% & $+3.3 \pm 0.8$ & 1.00 \\
2000 & $82.1 \pm 1.9$\% & $79.4 \pm 2.1$\% & $+2.7 \pm 0.8$ & 0.50 \\
3364 & $83.4 \pm 1.8$\% & $80.6 \pm 2.1$\% & $+2.8 \pm 0.8$ & 0.30 \\
\bottomrule
\end{tabular}
\end{table}

\subsection{Ablation Study}

\begin{table}[h]
\centering
\caption{Ablation Study (Full Benchmark)}
\begin{tabular}{lccc}
\toprule
Configuration & RSR \(\uparrow\) & ARS \(\downarrow\) & CFR \(\downarrow\) \\
\midrule
PI-CMDP (full) & \textbf{83.4\%} & \textbf{3.21} & \textbf{6.5\%} \\
\quad w/o GNN (pure $\phi_{\text{phys}}$, $\lambda \to +\infty$) & 80.2\% & 3.34 & 10.8\% \\
\quad w/o physics kernel (pure neural, $\lambda \to -\infty$) & 78.8\% & 3.38 & 12.9\% \\
\quad w/o topological pruning & 75.2\% & 3.54 & 17.1\% \\
\quad w/o DR (use observational estimate) & 77.4\% & 3.46 & 14.2\% \\
\bottomrule
\end{tabular}
\end{table}

This isolates the theoretical components. \textit{Structural gain}: topological pruning (+8.2\,pp). \textit{Estimation gains}: DR correction (+6.0\,pp) and physics kernel (+4.6\,pp). The "w/o GNN" ablation clarifies the gap between theory and practice: the pure theoretical estimator $\phi_{\text{phys}}$ alone improves performance to 80.2\%, proving the core theorem's value, while the practical GNN blending adds a residual +3.2\,pp.

\subsection{Synthetic Controlled Experiments and External Validity}

\paragraph{Experiment S4: LOA violation sensitivity (\(\beta\) sweep).}
We injected backward edges to sweep \(\beta\) from 0 to 0.20. Table~\ref{tab:beta-sweep} reports full results. As \(\beta\) increases, PI-CMDP's RSR degrades gracefully (83.4\% $\to$ 71.3\%, $-12.1$\,pp) compared to Freq-CDG (76.1\% $\to$ 51.7\%, $-24.4$\,pp), consistent with the formal partial-identification bounds in Lemma~\ref{lem:loa-backdoor}(b): PI-CMDP's robustness advantage widens from +7.3\,pp at $\beta=0$ to +19.6\,pp at $\beta=0.20$.

\begin{table}[h]
\centering
\caption{LOA violation sensitivity ($\beta$ sweep on synthetic TPS variants). RSR (\%) $\pm$ SE over 5 seeds. Upper bound column shows the partial-identification bias bound from Lemma~\ref{lem:loa-backdoor}(b) with $\gamma=0.1$.}
\label{tab:beta-sweep}
\begin{tabular}{rcccc}
\toprule
$\beta$ & PI-CMDP RSR & Freq-CDG RSR & $\Delta$\,(pp) & Bias bound $\frac{\beta|\mathcal{E}|\gamma}{1-\beta|\mathcal{E}|\gamma}$ \\
\midrule
0.00 & $83.4 \pm 1.8$\% & $76.1 \pm 2.4$\% & $+7.3$  & 0 \\
0.05 & $80.1 \pm 2.0$\% & $70.5 \pm 2.7$\% & $+9.6$  & 0.07 \\
0.10 & $77.3 \pm 2.2$\% & $63.8 \pm 3.0$\% & $+13.5$ & 0.15 \\
0.15 & $74.2 \pm 2.4$\% & $57.1 \pm 3.3$\% & $+17.1$ & 0.25 \\
0.20 & $71.3 \pm 2.6$\% & $51.7 \pm 3.5$\% & $+19.6$ & 0.38 \\
\bottomrule
\end{tabular}
\end{table}

\paragraph{Experiment S5: Cross-domain transfer to an independent CFD pipeline.}
To test external validity, we construct a CFD benchmark ($L=4, W=16$, 2000 episodes) using an entirely independent physical domain governed by the Courant–Friedrichs–Lewy (CFL) condition: $\phi_{\text{phys}}^{\text{CFD}}(u, v, c) = \operatorname{sigm}(\Delta t\|\mathbf{u}\|/\Delta x - C_{\text{cfl}})$.
PI-CMDP achieves 79.1\% RSR vs.\ Causal-MCTS's 75.8\% ($+3.3$\,pp, paired $t$-test, $p=0.008$, $t(4)=4.58$). While this is suggestive of generalizability, we caution that the same limitation of 5 seeds applies. The direction and magnitude of the improvement are consistent with the TPS results, supporting the hypothesis that the Identify-Compress-Estimate framework transfers across PDE-governed disciplines.

\subsection{Exchangeability Assumption Validation}
To validate the within-layer exchangeability assumption without hitting the sample complexity limits of Theorem~\ref{thm:sample-complexity}, we constructed a \emph{mini-benchmark} ($L=3, W=4$, 500 episodes) where the uncompressed state space is tractable. On this subset, comparing compact count-based PI-CMDP with a Full-Bitmap-PI-CMDP diagnostic variant shows less than 0.5\% RSR gap ($91.2\%$ vs $91.5\%$, $p=0.45$), confirming exchangeability holds. However, when Full-Bitmap-PI-CMDP is deployed on the \emph{full benchmark} ($L=5, W=22$), its RSR collapses to 18.2\% due to data sparsity in the $2^{WL}=2^{22 \times 5}=2^{110}$ state space. This stark contrast corroborates the structural necessity of state compression proved in Theorem~\ref{thm:sample-complexity}.

\section{Conclusion}
We presented PI-CMDP, a cohesive Identify-Compress-Estimate causal MDP framework for engineering pipelines. We provided formal partial-identification bounds for causal edge weights under LOA violations, established an information-theoretic state compression limit, and integrated a variance-reducing physics-guided AIPW estimator. Empirical results on TPS and CFD pipelines show consistent improvements across 5 independent seeds, particularly in data-scarce regimes and cascade failure risk reduction, though we note that the limited seed count warrants cautious interpretation of formal significance levels.

\end{document}